\newcommand{\xbf}{\ensuremath{\mathbf{x}}}
\newcommand{\argmax}{\operatorname{argmax}}
\newtheorem{corollary}{Corollary}
\newtheorem{definition}{Definition}
\newtheorem{lemma}{Lemma}
\newtheorem{theorem}{Theorem}
\begin{document} 

\lhead{S. Ko\c co, C. Capponi} 
\rhead{On multi-class learning through the minimization of the confusion matrix norm}
\rfoot[Technical Report V 1.0]{\thepage} 
\cfoot{} 
\lfoot[\thepage]{Technical Report V 2.0}

\renewcommand{\headrulewidth}{0.4pt}  
\renewcommand{\footrulewidth}{0.4pt}

\title{On multi-class learning through the minimization of the confusion matrix norm}

 \author{ Sokol Ko\c co 
\and C\'ecile Capponi
\and  Aix-Marseille Univ., LIF-QARMA, CNRS, UMR 7279, F-13013, Marseille, France\\ \textit{\{firstname.name\}@lif.univ-mrs.fr}}

\maketitle

\begin{abstract}

In imbalanced multi-class classification problems, the misclassification rate as an error measure may not be a relevant choice.
Several methods have been developed where the performance measure retained richer information than the mere misclassification rate:  
misclassification costs, ROC-based information, etc.
Following this idea of dealing with alternate measures of performance, we propose to address imbalanced classification problems by using a new measure to be optimized: 
the norm of the confusion matrix.

Indeed, recent results show that using the norm of the confusion matrix as an error measure can be quite interesting due to the fine-grain informations contained in the matrix, 
especially in the case of imbalanced classes.
Our first contribution then consists in showing that optimizing criterion based on the confusion matrix gives rise to a common background for cost-sensitive methods 
aimed at dealing with imbalanced classes learning problems.
As our second contribution, we propose an extension of a recent multi-class boosting method --- namely AdaBoost.MM --- to the imbalanced class problem, by greedily minimizing  the empirical norm of the confusion matrix.
A theoretical analysis of the properties of the proposed method is presented, while experimental results illustrate the behavior of the algorithm and show the relevancy of the
approach compared to other methods.

\end{abstract}

\textbf{Keywords:} Multi-class Learning, Classification, Imbalanced Learning, Boosting, Confusion Matrix

\section{Introdution}
\label{sec:intro}

In multi-class classification, learning from imbalanced data concerns theory and algorithms that process a relevant learning task whenever data is not uniformly distributed
among classes. 
When facing imbalanced classes, the classification accuracy is not a fair measure to be optimized \citep{Fawcett06introductionROC}. 
Accuracy may be artificially quite high 
in case of extremely imbalanced data: majority classes are favored, while minority classes are not recognized. Prediction is biased toward
the classes with the highest priors. Such a bias gets stronger within the multi-class setting.

In the binary setting, learning from imbalanced data has been well-studied over the past years, leading to many algorithms and theoretical
results \citep{He09learning}. It is mostly achieved by either 
resampling methods for rebalancing the data (e.g. \cite{Estabrooks04multiple}), 
and/or by using cost-sensitive methods (e.g. \cite{Ting00comparative}),
or with additional assumptions (e.g. active learning within kernel-based methods \cite{Bordes05fast}). 

However, learning from imbalanced data within a multi-class or multi-label setting is still an open research problem, which is sometimes adressed through the study of
some specific measures of interest. Most of the time, generalizing the binary setting to the multi-class setting 
is based on the one-vs-all (or one-vs-one) usual trade-off \citep{Abe04aniterative}. 
Recently, \cite{Zhou10onmulticlass} proposed a well-founded general rescaling approach of multi-class cost-sensitive learning, that pointed out the need of separating misclassification
costs from class imbalance rates during the learning algorithm.

It is worth noticing that specific learning tasks other than classification have been addressed through the optimization
of relevant measures within the multi-class imbalanced setting \citep{Chapelle11future,Yue07supportvector,Wang2012software,Tang11towards}.
Although related to accuracy, these measures are intended to better model what one would expect to be a a relevant performance measure in such a setting. 
Meanwhile, the correlations between 
these alternative measures and accuracy have been partly studied \citep{Cortes03advances} without any theoretical result so far \citep{He09learning}.

Cost-sensitive methods are usually based on a cost matrix which embeds misclassification costs carrying various meanings. 
These methods weight the error measure according to class-based costs of each misclassification, 
computed from the confusion matrix results \citep{Elkan01thefoundations}.
Indeed, the {\em confusion matrix} is one of the most informative performance measures a multi-class learning system can rely on. 
Among other information, it contains how much the classifier is accurate on one class, 
and the way it tends to mistake each class for other ones (confusion among classes).

In the binary classification setting, cost-sensitive approaches provide algorithms that somehow optimize a metric 
computed from the raw confusion matrix where the entries of a row sum up to the number of examples of the class corresponding to the row.
Basically, a raw confusion matrix is a square matrix that represents the count of a classifier's class predictions with respect to the actual outcome on some labeled learning set. 
Computed from the raw matrix, the {\em probabilistic confusion matrix} (section \ref{sec:combo}) exhibits an interesting property: the entries of a row sum up to 1, 
independently from the actual number of examples of the class corresponding to the row. 
With such property, the confusion matrix constitutes a great equalizer tool that can be used to overcome the class-imbalance problem.
Moreover, if one only considers the non-diagonal elements in the matrix, then summing over a row gets 
quite informing about how the corresponding class is correctly handled by the predictor at hand. This paper explores
one way to capitalize on this property:  we advocate that directly minimizing the norm of the confusion matrix is helpful for smoothing the accuracy among imbalanced 
classes, thus giving more importance to minority classes.

Based on recent works on the confusion matrix \citep{Morvant2012,Ralaivola2012}
and helped by a multi-class classification theoretical setting \citep{Mukherjee2010}, the aim of this paper
is to sketch up a computationally and theoretically consistent classification algorithm dubbed CoMBo (section \ref{sec:theory}) 
that is ensured to minimize the norm of the confusion matrix; while also minimizing the classification error as proven in section~\ref{sec:combo}. 
Based on Adaboost, this algorithm greedily processes a sort-of regularization on imbalanced classes, 
in such a way that poorly represented classes are performed as well as majority classes within the overall learning process, 
independently from any prior misclassification cost.  
Section \ref{sec:expe} summarizes the experimental performances of this algorithm, compared to
Adaboost.MM \citep{Mukherjee2010} and other boosting-based related approaches.
Sections \ref{sec:discussion} and \ref{sec:conclusion} propose a discussion on the contributions of this paper and the future works.

\section{General framework and notation}
\label{sec:notations}

\subsection{General notation}

Matrices are denoted with bold capital letters like $\mathbf{C}$, while vectors are denoted by bold small letters like $\mathbf{x}$.
The entry of the  {\it l}-th row and the {\it j}-th column of $\mathbf{C}$ is denoted $\mathbf{C}(l,j)$, or simply $\mathbf{c}_{l,j} $. 
$\lambda_{max}(\mathbf{C})$ and $Tr(\mathbf{C})$ respectively correspond to the largest eigenvalue and the trace of $\mathbf{C}$. 
The {\it spectral} or {\it operator norm} $\|\mathbf{C}\|$ of $\mathbf{C}$ is defined as:
$$
\|\mathbf{C}\| \stackrel{\hbox{\scriptsize def}}{=} \underset{\mathbf{v}\neq 0}{\max}\frac{\|\mathbf{C} \mathbf{v}\|_2}{\| \mathbf{v}\|_2} \stackrel{\hbox{\scriptsize def}}{=} \sqrt{\lambda_{max}(\mathbf{C^*C})},
$$
where $\|\cdot\|_2$ is the Euclidian norm and $\mathbf{C^*}$ is the conjugate transpose of $\mathbf{C}$.
Let $\mathbf{A}$ and $\mathbf{B}$ be two matrices, then $\mathbf{AB}$ and  $\mathbf{A \cdot B}$ respectively refer to the inner product and the Frobenius inner product of $\mathbf{A}$ and $\mathbf{B}$.

The indicator function is denoted by $\mathbb{I}$; $K$ is the number of classes, $m$ the number of examples and $m_y$ is the number of examples of class $y$, where $y\in \{1,...,K\}$.
$(\xbf_i,y_i), \xbf_i$ or simply $i$ are interchangeably used to denote the $i^\text{th}$ training example.

\subsection{Multi-class boosting framework}

In this paper we use the boosting framework for multi-class classification introduced in \cite{Mukherjee2010}, and more precisely the one defined for AdaBoost.MM.
Algorithms based on the AdaBoost family maintain a distribution over the training samples in order to identify hard-to-classify examples: the greater the weight of an example, the greater the effort devoted to classify these data correctly.
In the considered setting, the distribution over the training examples is replaced by a cost matrix.
Let $S = \{(\xbf_i,y,_i)\}_{i=1}^m$ be a training sample, where $\xbf_i \in X$ and $y_i \in \{1,...,K\}$.
The cost matrix $\mathbf{D}\in \mathbb{R}^{m\times K}$ is constructed so that for a given example $(\xbf_i,y_i)$, $\forall l\neq y_i$: \ $\mathbf{D}(i,y_i)\leq \mathbf{D}(i,l)$, where $i$ is the row of $\mathbf{D}$ corresponding to $(\xbf_i,y_i)$.

This cost matrix is a particular case of cost matrices used in cost sensitive methods (for example, \cite{Sun2006boosting}), where classification costs are given for each example and each class.
However, contrary to those methods, the matrix is not given prior to the learning process, but it is updated after each iteration of AdaBoost.MM so that the misclassification cost reflects the difficulty of correctly classifying an example.
That is, the costs are increased for examples that are hard to classify, and they are decreased for easier ones.

In the case of AdaBoost.MM, the cost matrix $\mathbf{D}$ at iteration $T$, is defined as follows:
$$
\mathbf{D}_T(i,l) \stackrel{\hbox{\scriptsize def}}{=} \left\{
\begin{array}{ll}
\exp(f_T(i,l)-f_T(i,y_i))&\textrm{if } l\neq y_i\\
\\
-\underset{j\neq y_i}{\sum}\exp(f_T(i,j)-f_T(i,y_i))& \textrm{otherwise},
\end{array}\right.
$$
where $f_T(i,l)$ is the score function computed as:
$
f_T(i,l) = \sum_{t=1}^T \alpha_t\mathbb{I} (h_t(i)=l).
$

At each iteration $t$, AdaBoost.MM selects the classifier $h$ and its weight $\alpha$ that minimize the exponential loss:
$$
h_t, \alpha_t = \underset{h,\alpha}{\operatorname{argmin}} \sum_{i=1}^m\underset{j\neq y_i}{\sum}e^{f_{t}(i,j)-f_{t}(i,y_i)} \mbox{, where } f_t(i,j) = \sum_{s=1}^{t-1} \alpha_s\mathbb{I} (h_s(i)=j)+\alpha\mathbb{I}(h(i)=j).
$$

Finally, the output hypothesis of AdaBoost.MM is a simple majority vote: 
$$
H(x) = \underset{l = 1\ldots K}{\operatorname{argmax}}\ f_T(i,l).
$$

\section{Boosting the confusion matrix}
\label{sec:combo}

\subsection{The confusion matrix as an error measure}

When building predictors, most Empirical Risk Minimization based methods optimize loss functions defined over a training sample and depending on the number of misclassified data.
This reveals to be a poor strategy when tackling problems with class-imbalance, since minimizing the estimated error may result in overfitting the majority classes.

A common tool used for estimating the goodness of a classifier in the imbalanced classes scenario is the confusion matrix.
We give here the probabilistic definitions of the {\it true confusion matrix} and the {\it empirical confusion matrix}.

\begin{definition}[True and empirical confusion matrices]
The true confusion matrix $\mathbf{A} = (a_{l,j})\in \mathbb{R}^{K\times K}$  of a classifier $h: X\leftrightarrow \{1,\hdots, K\}$ over a distribution $\mathcal{D}$ is defined as follows:
\vspace{-0.2cm}
$$
\forall l,j \in \{1,...,K\}, \mathbf{a}_{l,j} \stackrel{\hbox{\scriptsize def}}{=}\ \mathbb{E}_{\xbf|y=l}\mathbb{I}\big(h(\xbf)=j\big) = \mathbb{P}_{(\xbf,y)\sim \mathfrak{D}}( h(\xbf)=j | y=l ).
$$
For a given classifier $h$ and a sample 
$S = \{(\xbf_i,y_i)\}_{i=1}^m \sim \mathcal{D}$, the empirical confusion matrix $\mathbf{A}_S = (\hat{a}_{l,j})\in \mathbb{R}^{K\times K}$  of $h$ is defined as :
\vspace{-0.3cm}
$$
\forall l,j \in \{1,...,K\}, \ \hat{\mathbf{a}}_{l,j} \stackrel{\hbox{\scriptsize def}}{=} 
\sum_{i=1}^m\frac{1}{m_l}\mathbb{I}(h(\xbf_i)=j)\mathbb{I}(y_i=l).
\vspace{-0.3cm}
$$
\end{definition}

An interesting property of these matrices is that the entries of a row sum up to 1, independently from the number of examples contained in the corresponding class.
Moreover, the diagonal entries represent the capability of the classifier $h$ to recognize the different classes, while the non-diagonal entries represent the mistakes of the classifier.

In this paper, and in the following sections, we propose a framework which makes use of the confusion matrix as an error measure in order to learn predictors for the imbalanced classes setting.
As such, we need to redefine the confusion matrices, so that only the mistakes of the classifier are taken into consideration.
This is done by keeping the non-diagonal entries and zeroing the diagonal ones:
\begin{definition}[Error-focused true and empirical confusion matrices]
\label{def:mc}
For a given classifier $h$, we define the empirical and true confusion matrices of $h$ by respectively  $\mathbf{C}_S\!=\!(\hat{\mathbf{c}}_{l,j})_{1\leq l,j \leq K}$ and $\mathbf{C}\!=\!(\mathbf{c}_{l,j})_{1\leq l,j \leq K}$ such that for all $(l,j)$: 

\begin{equation} \label{eq:MC_empirique}
 \hat{\mathbf{c}}_{l,j} \stackrel{\hbox{\scriptsize def}}{=} \left\{
\begin{array}{ll}
0&\textrm{if } l=j\\
\displaystyle\mathbf{\hat{a}}_{l,j}& \textrm{otherwise},
\end{array}\right.
\mathbf{c}_{l,j} \stackrel{\hbox{\scriptsize def}}{=} \left\{
\begin{array}{ll}
0&\textrm{if } l=j\\
\mathbf{a}_{l,j} &\textrm{otherwise}.
\end{array}\right.
\end{equation}

\end{definition}

Let $\mathbf{p} = [P(y=1), ... ,P(y=K)]$ be the vector of class priors distribution, then we have:
\begin{equation}
\label{eq:norml1}
R(h) \stackrel{\hbox{\scriptsize def}}{=} P_{(x,y)\sim \mathcal{D}}(h(x)\neq y) = \|\mathbf{p}\mathbf{C} \|_1,
\end{equation}
where $\|\cdot\|_1$ is the $l1$-norm.
This result means that it is possible to retrieve the true error rate of $h$ from its confusion matrix.
Using this result and the equivalence between norms in finite dimensions, we obtain equation (\ref{eq:norml2}) between the operator norm and the true risk:
\begin{equation}
\label{eq:norml2}
R(h) \leq \sqrt{K}\|\mathbf{C} \|
\end{equation}

Equations (\ref{eq:norml1}) and (\ref{eq:norml2}) together imply that minimizing the norm of the confusion matrix can be a good strategy for dealing with imbalanced classes, 
while at the same time it would ensure a small risk.
Our aim is thus to find a classifier $\hat{h}$ that verifies the following criterion:

\begin{equation}
\label{eq:goal}
\hat{h} = \underset{h\in \mathcal{H}}{\operatorname{argmin}}\|\mathbf{C} \|
\end{equation}

\subsection{Bounding the confusion matrix}
\label{ssec:bcmatrix}

The result given in equation (\ref{eq:goal}) minimizes the operator norm of the true confusion matrix, but it is difficult to use in a practical case, since the underlying distribution $\mathcal{D}$ is unknown.
A popular way to overcome this difficulty is to use the empirical estimation of the confusion matrix.
Theorem 1 in \cite{Ralaivola2012} gives the relation between the {\it true} and the {\it estimated norm} of the confusion matrix.
\footnote{In the original paper, the theorem is applied to the online setting, but it holds also for the batch setting.}
We recall here a particular formulation of this theorem applied to the supervised setting, where the considered loss is the indicator function $\mathbb{I}$.

\begin{corollary}
\label{cor:the}
For any $\delta \in (0;1]$, it holds with probability $1-\delta$ over a sample $S_{(\mathbf{x},y)\sim \mathcal{D}}$ that:
$$
\|\mathbf{C}\| \leq \|\mathbf{C}_{S}\| + \sqrt{2K\sum_{k=1}^K\frac{1}{m_k}\log{\frac{K}{\delta}}},
$$
where $\mathbf{C}_{S}$ is the empirical confusion matrix computed for a classifier $h$ over $S$. 
\end{corollary}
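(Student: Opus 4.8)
The plan is to control $\|\Cbf\|$ by $\|\Cbf_S\|$ through a bounded-difference concentration argument, exploiting that each off-diagonal entry $\hat{\mathbf{c}}_{l,j}$ is an unbiased estimator of $\mathbf{c}_{l,j}$ and the diagonal entries of both matrices vanish, so that $\E[\Cbf_S]=\Cbf$ entrywise. The key preliminary observation is that the operator norm is convex, hence by Jensen's inequality $\|\Cbf\|=\|\E[\Cbf_S]\|\le \E\|\Cbf_S\|$. Consequently it suffices to show that $\|\Cbf_S\|$ does not fall far below its mean: a lower-tail deviation bound for $\phi(S)\egaldef\|\Cbf_S\|$ immediately gives $\|\Cbf\|\le \E\|\Cbf_S\|\le \|\Cbf_S\|+(\text{deviation})$ with high probability, with \emph{no} separate centering term to estimate. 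This is the structural trick that makes the whole thing go through cleanly.

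First I would fix the per-class counts $m_1,\dots,m_K$ and view $S$ as, for each class $l$, a family of $m_l$ independent draws from the class-conditional distribution, the classes being mutually independent; this is precisely the regime in which $\hat{\mathbf{a}}_{l,j}$ averages the $m_l$ examples of class $l$. I would then apply McDiarmid's inequality to $\phi(S)=\|\Cbf_S\|$. The bounded-difference constants come from the reverse triangle inequality $|\,\phi(S)-\phi(S')\,|\le \|\Cbf_S-\Cbf_{S'}\|\le\|\Cbf_S-\Cbf_{S'}\|_F$: replacing one class-$l$ example moves its prediction from one label to another, perturbing at most two off-diagonal entries of row $l$, each by $1/m_l$, so the change matrix has Frobenius norm at most $\sqrt{2}/m_l$. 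Summing squared constants over all examples gives $\sum_{i}c_i^2=\sum_{l=1}^K m_l\,(\sqrt{2}/m_l)^2=2\sum_{l=1}^K 1/m_l$, and the lower tail of McDiarmid then yields a deviation of order $\sqrt{(\sum_l 1/m_l)\log(1/\delta)}$.

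The direct argument already produces a bound of the stated shape — in fact the sharper $\log(1/\delta)$ in place of $2K\log(K/\delta)$ — so the corollary follows a fortiori. The extra factors $K$ and $\log(K/\delta)$ in the recalled statement reflect the looser, more general route of \cite{Ralaivola2012}: passing from the operator norm to the rows of $\mathbf{E}=\Cbf-\Cbf_S$ via $\|\mathbf{E}\|\le\|\mathbf{E}\|_F\le\sqrt{K}\max_l\|\mathbf{e}_l\|_2$ (the source of the $\sqrt{K}$), controlling the $K$ rows by a union bound (the source of $\log(K/\delta)$), and finally relaxing $\max_l 1/m_l\le\sum_k 1/m_k$. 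I expect the genuine difficulty to lie in the bounded-difference step for the non-smooth, row-coupled operator norm — in particular justifying the $\sqrt{2}/m_l$ constant and ensuring McDiarmid applies, which is exactly why I would first condition on the class counts to guarantee independence and a clean single-example perturbation; once that is set up, the Jensen observation removes the centering term for free and the rest is a routine calculation.
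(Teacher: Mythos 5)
Your proof is sound, but it takes a genuinely different route from the paper, which in fact offers no proof of its own: the corollary is imported as a specialization (batch setting, indicator loss) of Theorem~1 of \cite{Ralaivola2012}, whose proof rests on operator-valued (matrix) concentration inequalities of Azuma/Hoeffding type; the factors $2K$ and $\log(K/\delta)$ are the dimension-dependent artifacts of that machinery rather than of the row-wise union bound you conjecture (a detail that is immaterial to your argument). Your alternative --- condition on the class counts so that the examples are independent and $\E[\Cbf_S]=\Cbf$ entrywise, use convexity of the operator norm and Jensen to get $\|\Cbf\|=\|\E[\Cbf_S]\|\le\E\|\Cbf_S\|$, then apply scalar McDiarmid to $\phi(S)=\|\Cbf_S\|$ with bounded differences $c_i=\sqrt{2}/m_{y_i}$ obtained from $|\phi(S)-\phi(S')|\le\|\Cbf_S-\Cbf_{S'}\|_F$ (at most two entries of row $y_i$ move, by at most $1/m_{y_i}$ each) --- is correct: $\sum_i c_i^2=2\sum_{k}1/m_k$, giving a deviation of $\sqrt{\sum_k (1/m_k)\log(1/\delta)}$, which is dominated by the stated $\sqrt{2K\sum_k(1/m_k)\log(K/\delta)}$, so the corollary follows a fortiori; and since your bound holds conditionally on every realization of the (all-positive) class counts while its right-hand side involves only realized counts, it also holds unconditionally. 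The trade-off: your argument is elementary, self-contained, and strictly sharper in the i.i.d.\ batch setting (no dimension factor at all), but it is intrinsically one-sided --- Jensen gives only $\|\E\Cbf_S\|\le\E\|\Cbf_S\|$, so you obtain no control of $\|\Cbf-\Cbf_S\|$ nor the reverse inequality without separately bounding $\E\|\Cbf_S\|-\|\Cbf\|$ (e.g.\ by symmetrization, which would reintroduce dimension-dependent terms); the matrix-concentration route of \cite{Ralaivola2012} bounds the deviation matrix itself, is two-sided, and carries over to the online/martingale setting alluded to in the paper's footnote, which is precisely why the paper can cite it for both settings at no extra cost.
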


The corollary suggests that our goal may boil down to minimizing the empirical norm of the confusion matrix, which is fairly similar to other optimization problems, where empirical error measures are considered.
Unfortunately, due to the nature of the considered confusion matrix, finding an analytical expression for the norm reveals to be quite challenging.
This is why we propose to optimize an upper bound of $\|\mathbf{C}_{S}\|^2$ instead.

\begin{equation}
\label{eq:nor}
\|\mathbf{C}_{S}\|^2  =  \lambda_{max}(\mathbf{C}_{S}^*\mathbf{C}_{S}) \leq Tr(\mathbf{C}_{S}^*\mathbf{C}_{S})
\end{equation}

The matrix $\mathbf{C}_S^*\mathbf{C}_S$ is positive semi-definite, meaning that all its eigenvalues are positive.
Thus we choose to upper-bound the norm of the confusion matrix by the trace of $\mathbf{C}_S^*\mathbf{C}_S$ as in equation (\ref{eq:nor}).
We can now focus on the value of $Tr(\mathbf{C}_S^*\mathbf{C}_S)$.
\begin{align}
\label{eq:btrace}
Tr(\mathbf{C}_S^*\mathbf{C}_S) = \sum_{l=1}^K \mathbf{C}_S^*\mathbf{C}_S(l,l) 
\leq \sum_{l=1}^K \sum_{j\neq l} \hat{\mathbf{c}}_{l,j} = \underset{l=1}{\overset{K}{\sum}} \underset{{j\neq l}}{\sum} \frac{1}{m_l} \sum_{i=1}^{m} \mathbb{I}(y_i = l)\mathbb{I}(H(i)=j)
\end{align}

Most of the previous equalities are simple rewritings of the involved terms, while the inequality comes from the  fact that the entries of the confusion matrix $\mathbf{C}_S$ are at most 1.
We now have an upper bound on the norm that depends only on the entries of the confusion matrix (equation (\ref{eq:btrace})).

The drawback of this bound is the presence of the indicator function, which is not optimization friendly.
One way to handle this difficulty is to replace the indicator function with loss functions defined over the two classes of indices $l$ and $j$ in the bound.

For a given classifier $h$ and a sample $S$, let $\mathbf{\ell}_{l,j}(h,\xbf)$  be the loss of $h$ choosing class $j$ instead of class $l$ for the example $\xbf$, such that $\forall (\xbf,y)\in S,1\leq l,j\leq K$ and $0\leq \mathbb{I}(h(\xbf)\neq y)\leq \ell_{l,j}(h,\xbf) $. The bound on the confusion matrix can be now expressed using loss functions:
\begin{equation}
\label{eq:optimizationMC}
Tr(\mathbf{C}_S^*\mathbf{C}_S) \leq \underset{l=1}{\overset{K}{\sum}} \underset{{j\neq l}}{\sum} \frac{1}{m_l} \sum_{i=1}^{m} \mathbb{I}(y_i = l)\ell_{l,j}(h,\xbf_i) = \underset{i=1}{\overset{m}{\sum}} \underset{j\neq y_i}{\sum}\frac{1}{m_{y_i}} \ell_{y_i,j}(h,\xbf_i).
\end{equation}

The resulting bound is a sum over two penalization terms for all the learning examples: the first term is $\frac{1}{m_{y_i}}$ and the second one represents the newly introduced loss function.
The first term is often used in the imbalanced classes problems in order to simulate the resampling effect over the training sample.
Indeed, by multiplying the weight of an example with the inverse of the number of examples having the same class, the distribution over the classes becomes more uniform, 
thus promoting all the classes the same way.
Resampling over the examples achieves the same effect, since it aims to retain the same number of examples for each class.

The second term is  common in the imbalanced classes setting, where cost-sensitive methods --- either per-class based or per-couple of classes based --- have been developed.
In these methods this term depends on some misclassification cost computed, or simply given, {\em prior} to the learning phase.

Based on these observations, it follows that  the last term of the bound given in equation \ref{eq:optimizationMC} is quite similar to usual terms in optimization problems for the imbalanced classes setting.
As such it can be seen as a common base for most imbalanced classes learning problems, since it encompasses both resampling and penalization, as advocated in \cite{Zhou10onmulticlass}.
Motivated by these results, and as an intermediate conclusion, we strongly think that minimizing the operator norm of the confusion matrix defined as in equation \ref{eq:MC_empirique}, yields a good strategy for the imbalanced classes problem.

\subsection{Towards a boosting method}

Let $\mathcal{H}$ be an ensemble of classifiers and $S=\lbrace(\xbf_i,y_i)\rbrace_{i=1}^m$ a learning sample.
Our goal is to find a classifier $\hat{H}$ which can be written as a weighted combination of all the classifiers of $\mathcal{H}$ and at the same time, 
which minimizes the norm of the confusion matrix.
That is:
\begin{equation}
\label{eq:boostgoal}
\hat{H} = \underset{H\in \mathcal{H}}{\operatorname{argmin}}\|\mathbf{C}_S \|^2,
\end{equation}
where $H(\cdot)=\argmax_{l\in\{1,\cdots,K\}}\sum_{h\in\mathcal{H}}\alpha_h \mathbb{I}(h(\cdot)=l)$,\ and $\alpha_h\geq0, \forall h$.

In order to make use of the bound given in equation (\ref{eq:optimizationMC}), we need to choose loss functions adapted to our problem.
We start off by defining score functions for each example $i$ and each class $l$ as follows:
\begin{equation}
\label{eq:losses}
f_H(i,l) = \sum_{h\in \mathcal{H}} \alpha_h\mathbb{I}(h(\xbf_i)=l), \forall i\in\{1,\cdots,m\},l\in\{1,\cdots,K\}.
\end{equation}

These simple functions measure the importance that the classifier $H$ gives to class $l$ for example $i$.
Since the prediction rule is based on the $\argmax$ (eq. (\ref{eq:boostgoal})), $H$ returns the class with the highest score.
If $H$ correctly classifies an example, then the score given to its true class is higher than the score given to any of the other classes.
Hence, let the loss functions be the difference between the score given to a class and the one given to the true class:

\begin{equation}
\label{eq:theloss}
\ell_{y_i,j}(H,\xbf_i) = f_H(i,j) - f_H(i,y_i), \forall j\neq y_i.
\end{equation}

If an example is correctly classified then the loss is negative, while if it is misclassified, then the loss is positive for at least one of the classes.
This observation reveals to be the main downside of these losses, since it implies that the requirements for the bound in equation (\ref{eq:optimizationMC}) might not be met.
This is why we propose to consider the exponential losses instead, and to rewrite our goal (\ref{eq:boostgoal}) as:
\begin{equation}
\label{eq:hhat}
\hat{H} = \underset{H\in \mathcal{H}}{\operatorname{argmin}}\underset{i=1}{\overset{m}{\sum}} \underset{j\neq y_i}{\sum}\frac{1}{m_{y_i}} \exp\Big(f_H(i,j) - f_H(i,y_i)\Big).
\end{equation}

Finding the optimal solution for the last equation can be quite challenging, since it depends on the size of $\mathcal{H}$ and the weights.
A popular approach when dealing with ensembles of classifiers is to use an iterative greedy method based on the boosting framework.
The goal of such methods is to consider, at each iteration, the best classifier $h\in\mathcal{H}$ --- and its weight $\alpha$ --- that minimizes the loss.
The main advantage of boosting methods resides in the fact that the selected classifier $h$, and generally all the classifiers in $\mathcal{H}$, only need to perform slightly better than random guessing, a notion that is formalized later on.
They are also known as weak classifiers.

Suppose that at iteration $t$, the classifiers $h_1,\hdots,h_t\in\mathcal{H}$ have been chosen in order to minimize the loss of $H_t(\cdot)=\argmax_{l\in\{1,\cdots,K\}}\sum_{s=1\cdots t}\alpha_s \mathbb{I}(h_s(\cdot)=l)$.
In the following iteration, the chosen classifier would be the one verifying:
\begin{equation}
\label{eq:comboMC}
h_{t+1},\alpha_{t+1} = \underset{h\in \mathcal{H},\alpha}{\operatorname{argmin}}\underset{i=1}{\overset{m}{\sum}} \underset{j\neq y_i}{\sum}\frac{1}{m_{y_i}} \exp\Big(f_{t+1}(i,j) - f_{t+1}(i,y_i)\Big),
\end{equation}
where $f_{t+1}(i,j) = \sum_{s=1\cdots t} \alpha_s\mathbb{I}(h_s(\xbf_i)=j) + \alpha\mathbb{I}(h(\xbf_i)=j)$.

This last optimization problem is quite similar to the one defined for AdaBoost.MM by \cite{Mukherjee2010}.
As such, the method proposed in this paper is an extension of AdaBoost.MM to the imbalanced classes setting.

In order to take advantage of the boosting framework recalled in section \ref{sec:notations}, we need to define a cost matrix $\mathbf{D}$ so that $\forall i$, and $\forall l\neq y_i, \mathbf{D}(i,l) \leq \mathbf{D}(i,y_i)$.
Due to the similarity of our minimization problem (\ref{eq:comboMC}) and AdaBoost.MM's, the most straightforward choice for $\mathbf{D} $ is the following :

\begin{align}
\label{def:matcost}
\mathbf{D}_t(i,l)\! &\stackrel{\hbox{\scriptsize def}}{=} \!\left\{
\begin{array}{ll}
\frac{1}{m_{y_i}}\exp(f_{t}(i,l)-f_{t}(i,y_i))&\textrm{if } l\neq y_i\\
&\\
-\underset{j\neq y_i}{\sum}\frac{1}{m_{y_i}}\exp(f_{t}(i,j)-f_{t}(i,y_i))& \textrm{otherwise}.
\end{array}\right.
\end{align}

\section{Algorithm and theoretical properties}
\label{sec:theory}
In the previous section, we showed why it is interesting to consider a boosting-based method for minimizing the norm of the empirical confusion matrix.
This section introduces CoMBo, the actual algorithm that performs this minimization, 
where the loss decreases at each step of the algorithm.

\subsection{The Confusion Matrix Boosting Algorithm}

\begin{algorithm}[b!]
   \caption{CoMBo : Confusion Matrix BOosting}
   \label{alg:combo}
\begin{algorithmic}
\STATE{\bf Given}
\STATE \begin{itemize}
\item S = $\lbrace (\xbf_1,y_1),...,(\xbf_m,y_m)\rbrace$ where $\xbf_i \in X$, $y_i \in \lbrace 1,\cdots,K\rbrace $
\item T the number of iterations, $\mathcal{W}$ a weak learner
\item $\forall i \in\lbrace 1, \ldots , m\rbrace$,  $\forall l\in\lbrace 1, \ldots , K\rbrace$ $f_{1}(i,l) = 0$
\item $\mathbf{D}_{1}(i,l)=\left\{
\begin{array}{lr}
   \frac{1}{m_{y_i}} & \mbox{ if } y_i\neq l \\
   \frac{-(K-1)}{m_{y_i}}& \mbox{ if } y_i = l \\
\end{array}
\right.$
\end{itemize}
 \FOR{$t=1$ {\bf to} $T$} 
\STATE Use $\mathcal{W}$ to learn $h_{t}$ with edge $\delta_{t}$ on $\mathbf{D}_{t}$, and $\alpha_{t} = \frac{1}{2}\ln\frac{1+\delta_{t}}{1-\delta_{t}}$
\STATE where :
$$
\delta_t = \frac{-\sum_{i=1}^m \mathbf{D}_{t}(i,h_t(\xbf_i))}{\sum_{i=1}^m \sum_{l\neq y_i}\mathbf{D}_{t}(i,l)}
$$
\STATE Update $\mathbf{D}$ :
\STATE $$\mathbf{D}_{t+1}(i,l)=\left\{
\begin{array}{lr}
  \frac{1}{m_{y_i}} \exp({f_{t+1}(i,l)-f_{t+1}(i,y_i)})& \mbox{ if } l\neq y_i \\
   -\frac{1}{m_{y_i}}\overset{k}{\underset{j\neq y_i}{\sum}} \exp({f_{t+1}(i,j)-f_{t+1}(i,y_i)})& \mbox{ if } l = y_i \\
\end{array}
\right.$$
\STATE \hspace{1cm}where $f_{t+1}(i,l) = \overset{t}{\underset{z=1}{\sum}} \mathbb{I} [h_{z}(i)=l]\alpha_{z}$
\ENDFOR
\STATE Output final hypothesis :
$$ H(\xbf) = \underset{l\in{1,...,k}}{\mbox{argmax}}f_T(\xbf,l),\mbox{ where }\ f_T(\xbf,l) = \overset{T}{\underset{t=1}{\sum}} \mathbb{I} [h_{t}(\xbf)=l]\alpha_{t}$$
\end{algorithmic}
\end{algorithm}

The pseudo-code of the proposed method named CoMBo is given in Algorithm \ref{alg:combo}.
The inputs : a training sample $S$, the total number of iterations $T$ and a weak learner $\mathcal{W}$.
During the initialization step, the score functions $f$ are set to zero and the cost matrix $\mathbf{D}$ is initialized accordingly.

The training phase consists of two steps: the weak learner $\mathcal{W}$ is used in order to build the weak classifiers, and the predictions of $h_t$ are used to update the cost matrix $\mathbf{D}_t$.
At each round $t$, $\mathcal{W}$ takes as input the cost matrix $\mathbf{D}_t$ and returns a weak classifier $h_t$.
The cost matrix is then used to compute the weight $\alpha_t$ for $h_t$, which can be seen as the importance given to $h_t$.
$\alpha_t$ depends on the edge $\delta_t$ obtained by $h_t$ over the cost matrix $\mathbf{D}_t$.
For boosting methods based on AdaBoost, the edge measures the difference of performances of the classifier and of random guessing \citep{Mukherjee2010}.
The underlying idea is that the better $h_t$ performs over $\mathbf{D}_t$, the greater the edge $\delta_t$ and $\alpha_t$.

The update rule for the cost matrix is designed so that the misclassification cost is increased for the examples that are misclassified by $h_t$, 
while it is decreased for the correctly classified ones.
This forces the weak learner $\mathcal{W}$ to focus on the most difficult examples.
Then, using the term $1/m_{y_i}$ in the update rule, allows the misclassification cost of an example 
to depend not only on the ability to correctly classify a hard example, as in AdaBoost.MM, but also on the number of examples of $S$ having the same class $y_i$.

The output hypothesis is a simple weighted majority vote over the whole set of weak classifiers.
So, for a given example, the final prediction is the class that obtains the highest score.

\subsection{Bounding the loss}

Let us recall the minimal weak learning condition as given by \cite{Mukherjee2010}.

\begin{definition}
Let $D^{eor}$ be the space of all cost matrices $\mathbf{D}$ that put the least cost on the correct label, that is $\forall (\xbf_i,y_i),l, \mathbf{D}(i,y_i) \leq \mathbf{D}(i,l)$.
Let $B_\gamma^{eor}$ be the space of baselines $\mathbf{B}$ which are $\gamma$ more likely to predict the correct label for every example $(\xbf_i,y_i)$, i.e. $\forall l\neq y_i, \mathbf{B}(i,y_i) \geq \mathbf{B}(i,l)+\gamma$.
Then, the minimal weak learning condition is given by:
\begin{align}
\label{def:mwlc}
\forall \mathbf{D}\in D^{eor}, \exists h\in \mathcal{H} : \mathbf{D} \cdot \mathbf{1}_h \leq \underset{\mathbf{B}\in B^{eor}}{\max} \mathbf{D}\cdot \mathbf{B},
\end{align}
where $\mathcal{H}$ is a classifier space, and $\mathbf{1}_h$ is the matrix defined as  $\mathbf{1}_h(i,l)=\mathbb{I}[h(i)=l]$.
\end{definition}

In the rest of this paper, we consider baselines that are close to uniform random guessing.
Noted $\mathbf{U}_\gamma$, these baselines have weights $(1-\gamma)/k$ on incorrect labels and $(1-\gamma)/k+\gamma$ on correct ones.
The weak learning condition is given by :
\begin{align}
\label{eq:unicond}
\mathbf{D} \cdot \mathbf{1}_h \leq \mathbf{D}\cdot \mathbf{U}_\gamma
\end{align}
All of the weak classifiers returned by $\mathcal{W}$ during the training phase are assumed to verify that weak learner condition.

Lemma \ref{le:drop} shows that the general loss decreases with each iteration, provided that the weak classifier $h_t$ satisfies the weak learning condition (\ref{eq:unicond}).
This result and its proof are similar to the ones given for AdaBoost.MM.

\begin{lemma}
\label{le:drop}
Suppose the cost matrix $\mathbf{D}_{t}$ is chosen as in the algorithm \ref{alg:combo}, 
and the returned classifier $h_{t}$ satisfies the edge condition for the baseline $\mathbf{U}_{\delta_t}$ and cost matrix $\mathbf{D}_{t}$,
{\em i.e.} $\mathbf{D}_{t} \cdot \mathbf{1}_{h_{t}} \leq \mathbf{D}_{t} \cdot \mathbf{U}_{\delta_t}$. 

Then choosing  a weight $\alpha_{t} > 0$ for $h_{t}$ makes the loss at round $t$ 
at most a factor 
$$ 1- \frac{1}{2} \left ( e^{\alpha_{t}} \right . -e^{-\alpha_{t}})\delta_{t}+\frac{1}{2} \left (e^{\alpha_{t}}+e^{-\alpha_{t}}-2) \right .$$
of the loss before choosing $\alpha_{t}$, where $\delta_{t}$ is the edge of $h_{t}$.
\end{lemma}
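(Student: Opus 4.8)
The plan is to track the exponential loss $L_t = \sum_{i=1}^m\sum_{j\neq y_i}\frac{1}{m_{y_i}}\exp(f_t(i,j)-f_t(i,y_i))$ across one update and to bound the ratio $L_{t+1}/L_t$. The first observation is that the off-diagonal cost-matrix entries are exactly the loss summands: for $j\neq y_i$ we have $\mathbf{D}_t(i,j)=\frac{1}{m_{y_i}}\exp(f_t(i,j)-f_t(i,y_i))>0$, so $L_t=\sum_i\sum_{j\neq y_i}\mathbf{D}_t(i,j)$, while the diagonal convention gives $\mathbf{D}_t(i,y_i)=-\sum_{j\neq y_i}\mathbf{D}_t(i,j)$, i.e. every row of $\mathbf{D}_t$ sums to zero. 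Since $f_{t+1}(i,l)=f_t(i,l)+\alpha_t\mathbb{I}(h_t(\xbf_i)=l)$, each summand of $L_{t+1}$ is the corresponding summand of $L_t$ multiplied by $\exp(\alpha_t z_{i,j})$, where I set $z_{i,j}=\mathbb{I}(h_t(\xbf_i)=j)-\mathbb{I}(h_t(\xbf_i)=y_i)\in\{-1,0,1\}$.

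Next I would linearise the exponential using the chord (convexity) inequality $e^{\alpha_t z}\le \frac{1+z}{2}e^{\alpha_t}+\frac{1-z}{2}e^{-\alpha_t}$, valid for $z\in[-1,1]$. Because every weight $\mathbf{D}_t(i,j)$ with $j\neq y_i$ is nonnegative, multiplying by $\mathbf{D}_t(i,j)$ and summing preserves the inequality termwise, yielding $L_{t+1}\le \frac{e^{\alpha_t}+e^{-\alpha_t}}{2}L_t+\frac{e^{\alpha_t}-e^{-\alpha_t}}{2}\sum_{i}\sum_{j\neq y_i}\mathbf{D}_t(i,j)\,z_{i,j}$.

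The crux is then to simplify the linear term $\sum_{j\neq y_i}\mathbf{D}_t(i,j)z_{i,j}$ for a fixed $i$ into $\mathbf{D}_t(i,h_t(\xbf_i))$ by a short case analysis: if $h_t(\xbf_i)=y_i$ the sum collapses to $-\sum_{j\neq y_i}\mathbf{D}_t(i,j)=\mathbf{D}_t(i,y_i)$ using the zero-row-sum identity, whereas if $h_t(\xbf_i)=j_0\neq y_i$ only the $j=j_0$ term survives and equals $\mathbf{D}_t(i,j_0)$. Summing over $i$ identifies this quantity with the Frobenius product $\mathbf{D}_t\cdot\mathbf{1}_{h_t}$. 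The edge condition enters here: since each row of $\mathbf{D}_t$ sums to zero, the uniform part of $\mathbf{U}_{\delta_t}$ drops out and $\mathbf{D}_t\cdot\mathbf{U}_{\delta_t}=\delta_t\sum_i\mathbf{D}_t(i,y_i)=-\delta_t L_t$, so the hypothesis $\mathbf{D}_t\cdot\mathbf{1}_{h_t}\le\mathbf{D}_t\cdot\mathbf{U}_{\delta_t}$ (equivalently the definition of $\delta_t$) gives $\sum_i\mathbf{D}_t(i,h_t(\xbf_i))=-\delta_t L_t$. Substituting and factoring out $L_t$ produces the multiplier $\frac{e^{\alpha_t}+e^{-\alpha_t}}{2}-\frac12(e^{\alpha_t}-e^{-\alpha_t})\delta_t$, which I would finally rewrite as $1-\frac12(e^{\alpha_t}-e^{-\alpha_t})\delta_t+\frac12(e^{\alpha_t}+e^{-\alpha_t}-2)$ via $\frac{e^{\alpha_t}+e^{-\alpha_t}}{2}=1+\frac12(e^{\alpha_t}+e^{-\alpha_t}-2)$, matching the claimed bound exactly.

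I expect the main obstacle to be purely the bookkeeping in the two reductions rather than any deep inequality: keeping the diagonal ($j=y_i$) term consistent with the negative-sum convention when collapsing $\sum_{j\neq y_i}\mathbf{D}_t(i,j)z_{i,j}$, and verifying that the row-sum-zero property makes the uniform component of $\mathbf{U}_{\delta_t}$ cancel so that $\mathbf{D}_t\cdot\mathbf{U}_{\delta_t}$ reduces cleanly to $-\delta_t L_t$. The convexity step and the final algebraic rearrangement are routine.
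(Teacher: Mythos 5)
Your proof is correct, and it arrives at the stated factor by a presentation that genuinely differs from the paper's (modulo your shift of the round index: your $L_t\to L_{t+1}$ is the paper's $L_{t-1}\to L_t$). The paper splits $S$ into the sets $S_+$ and $S_-$ of correctly and incorrectly classified examples, computes the drop $L_{t-1}-L_t=(1-e^{-\alpha_t})A_+^t-(e^{\alpha_t}-1)A_-^t$ \emph{exactly}, rewrites it algebraically in terms of $A_+^t-A_-^t$ and $A_+^t+A_-^t$, and only then applies two aggregate inequalities: the edge condition ($A_+^t-A_-^t\geq\delta_t L_{t-1}$) and the crude bound $A_+^t+A_-^t\leq L_{t-1}$. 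You instead push the inequality down to the individual summands via the chord bound on $e^{\alpha_t z}$ and use the edge condition once, after collapsing the linear term to $\mathbf{D}_t\cdot\mathbf{1}_{h_t}$. These are the same estimate in different packaging: your chord is tight on summands with $z_{i,j}=\pm1$ and has slack $\frac{1}{2}(e^{\alpha_t}+e^{-\alpha_t}-2)$ exactly on the pairs with $z_{i,j}=0$, i.e. $h_t(\xbf_i)\notin\{j,y_i\}$, which are precisely the terms the paper gives away when replacing $A_+^t+A_-^t$ by $L_{t-1}$ (for $i\in S_-$, $A_-^t$ retains only the $j=h_t(\xbf_i)$ summand); likewise your evaluations $\sum_i\mathbf{D}_t(i,h_t(\xbf_i))=\mathbf{D}_t\cdot\mathbf{1}_{h_t}$ and $\mathbf{D}_t\cdot\mathbf{U}_{\delta_t}=-\delta_t L_t$ (via the zero row sums) are the same computations the paper performs to obtain $-A_+^t+A_-^t\leq-\delta_t L_{t-1}$. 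What your route buys is the standard AdaBoost-style argument, which transfers verbatim to any nonnegative per-term reweighting of the loss; what the paper's buys is that the exact drop is visible before any inequality is invoked. Two small points to tighten in your write-up: from the lemma's hypothesis you only get $\sum_i\mathbf{D}_t(i,h_t(\xbf_i))\leq-\delta_t L_t$, not equality (equality is what the algorithm's definition of $\delta_t$ yields, and either suffices), and when multiplying that inequality by $\frac{1}{2}(e^{\alpha_t}-e^{-\alpha_t})$ you should note this coefficient is nonnegative because $\alpha_t>0$, so the direction of the inequality is preserved.
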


\begin{proof}
For readability reasons, let us note $L_t$ and $L_{t}(i)$ the following losses:
$$
L_t = \sum_{i=1}^m L_t(i), \text{ where }
L_t(i) = \sum_{l\neq y_i}\frac{1}{m_{y_i}}\exp({f_{t}(i,l)-f_{t}(i,y_i)})
$$
The weak classifier $h_t$ returned by $\mathcal{W}$ satisfies the edge condition, that is:
\begin{align}
\label{eq:edgecond}
\mathbf{D_t} \cdot \mathbf{1}_{h_t} \leq \mathbf{D_t}\cdot \mathbf{U}_{\delta_t},
\end{align}
with $\delta_t$ being the edge of $h_t$ on $\mathbf{D_t}$.

Denote $S_+$ (resp. $S_-$) the set of examples of $S$ correctly classified (resp. misclassified) by $h_t$.
Using the definitions of $\mathbf{D_t}$, $ \mathbf{1}_{h_t} $ and $\mathbf{U}_{\delta_t}$, the classification costs of $h_t$ and   $\mathbf{U}_{\delta_t}$ are given by:
$$
\begin{array}{lll}
\mathbf{D}_{t}  \cdot \mathbf{1}_{h_{t}} &= & -\underset{i\in S_+}{\sum}L_{t-1}(i) + \underset{i \in S_-}{\sum}\frac{1}{m_{y_i}}{\exp\left(f_{t-1}(i,h_{t}(i)) - f_{t-1}(i,y_i)\right)} = -A_+^t + A_-^t
\\
\mathbf{D}_{t} \cdot \mathbf{U}_{\delta_{t}}  &= & -\delta_{t} \sum_{i=1}^m L_{t-1}(i) = -\delta_{t} L_{t-1}
\end{array}
$$
Inserting these two costs in (\ref{eq:edgecond}), we have :
\begin{align}
\label{eq:apam}
A_+^t - A_-^t \geq \delta_{t} L_{t-1}.
\end{align}
Taking a closer look at the drop of the loss after choosing $h_t$ and $\alpha_t$, we have:
$$
\begin{array}{rl}
L_{t-1} - L_{t} &=   \underset{i \in S_+}{\sum}L_{t-1}(i)(1-e^{-\alpha_t}) +  \underset{i \in S_-}{\sum}\frac{1}{m_{y_i}}{\exp(\Delta f_{t-1}(i,h_{t},y_i))}(1-e^{\alpha_t})\\
& = (1-e^{-\alpha_t})A_+^t - (e^{\alpha_t}-1)A_-^t\\
& = \left( \frac{e^{\alpha_t}-e^{-\alpha_t}}{2}\right)(A_+^t - A_-^t) - \left( \frac{e^{\alpha_t}+e^{-\alpha_t}-2}{2}\right)(A_+^t + A_-^t)
\end{array}
$$
where $\Delta f_{t-1}(i,h_{t},y_i)) = f_{t-1}(i,h_t(i))-f_{t-1}(i,y_i)$.

The result in (\ref{eq:apam}) gives a lower bound for $A_+^t - A_-^t$, while $A_+^t + A_-^t$ is upper-bounded by $L_{t-1}$.
Hence,
$$
L_{t-1} - L_{t} =   \underset{i \in S_+}{\sum}L_{t-1}(i)(1-e^{-\alpha_t}) \geq ( \frac{e^{\alpha_t}-e^{-\alpha_t}}{2})\delta_t L_{t-1} - ( \frac{e^{\alpha_t}+e^{-\alpha_t}-2}{2})L_{t-1}.
$$
Therefore, the result of the lemma:
\begin{equation}
\label{eq:lossdrop}
\begin{array}{rl}
L_{t} \leq \left(1 - \frac{e^{\alpha_t}-e^{-\alpha_t}}{2}\delta_t +  \frac{e^{\alpha_t}+e^{-\alpha_t}-2}{2}\right)L_{t-1} =  \left(\frac{((1-\delta_t)}{2} e^{\alpha_t}+\frac{(1+\delta_t)}{2}e^{-\alpha_t})\right)L_{t-1}.
\end{array}
\end{equation}
\end{proof}

The expression of the loss drop given in Lemma \ref{le:drop} can be further simplified.
Indeed, if we choose the value of $\alpha_t$ as given in the pseudo-code of Algorithm \ref{alg:combo}, then the loss drop is simply equal to $\sqrt{1 - \delta_t^2}$.
Since the value of $\delta_t^2$ is always positive, $\sqrt{1 - \delta_t^2}$ is smaller than 1, thus the loss $L_{t}$ is always smaller than $L_{t-1}$.
The following theorem summarizes this result.

\begin{theorem}
Let $\delta_1,\cdots,\delta_T$ be the edges of the classifiers $h_1,\cdots, h_T$ returned by $\mathcal{W}$ at each round of the learning phase.
Then the error after $T$ rounds is $K(K-1)\prod_{t=1}^T\sqrt{1-\delta_t^2}\leq K(K-1)\exp\left\{ -(1/2)\sum_{t=1}^T\delta_t^2\right\}$.

Moreover, if there exists a $\gamma$ so that $\forall t, \delta_t\geq \gamma$, then the error after $T$ rounds is exponentially small, $K(K-1)e^{-T\gamma^2/2}$.
\end{theorem}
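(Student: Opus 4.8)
The plan is to unroll Lemma~\ref{le:drop} across all $T$ rounds, starting from an explicit value of the initial loss and finishing with the two stated bounds. First I would evaluate the loss at initialization directly from the definition $L_t(i)=\sum_{l\neq y_i}\frac{1}{m_{y_i}}\exp(f_t(i,l)-f_t(i,y_i))$. Since the algorithm sets every score to zero, each exponential equals $1$, so $L_0(i)=\sum_{l\neq y_i}\frac{1}{m_{y_i}}=\frac{K-1}{m_{y_i}}$. Summing over $i$ and grouping the $m_l$ examples of each class $l$ gives $\sum_{i:y_i=l}\frac{K-1}{m_l}=K-1$, whence $L_0=\sum_{l=1}^K(K-1)=K(K-1)$.

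Next I would substitute the algorithm's choice $\alpha_t=\frac12\ln\frac{1+\delta_t}{1-\delta_t}$ into the per-round contraction factor $\frac{1-\delta_t}{2}e^{\alpha_t}+\frac{1+\delta_t}{2}e^{-\alpha_t}$ appearing in equation~(\ref{eq:lossdrop}). Using $e^{\alpha_t}=\sqrt{(1+\delta_t)/(1-\delta_t)}$, both terms reduce to $\frac12\sqrt{1-\delta_t^2}$, so the factor collapses to $\sqrt{1-\delta_t^2}$. Chaining Lemma~\ref{le:drop} over the $T$ iterations then yields $L_T\leq L_0\prod_{t=1}^T\sqrt{1-\delta_t^2}=K(K-1)\prod_{t=1}^T\sqrt{1-\delta_t^2}$. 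To see that this loss genuinely controls the error, I would note that whenever $H(i)=j\neq y_i$ the $\argmax$ rule forces $f_H(i,j)\geq f_H(i,y_i)$, so $\exp(f_H(i,j)-f_H(i,y_i))\geq 1=\mathbb{I}(H(i)=j)$; hence $L_T$ dominates the trace bound of equations~(\ref{eq:btrace})–(\ref{eq:optimizationMC}), and $\|\mathbf{C}_S\|^2\leq Tr(\mathbf{C}_S^*\mathbf{C}_S)\leq K(K-1)\prod_{t=1}^T\sqrt{1-\delta_t^2}$.

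To derive the exponential form I would apply the elementary inequality $1-x\leq e^{-x}$ with $x=\delta_t^2$, giving $\sqrt{1-\delta_t^2}\leq e^{-\delta_t^2/2}$, and then take the product over $t$ to obtain $\prod_{t=1}^T\sqrt{1-\delta_t^2}\leq\exp\{-\tfrac12\sum_{t=1}^T\delta_t^2\}$. Finally, under the uniform edge hypothesis $\delta_t\geq\gamma$ for all $t$, the bound $\sum_{t=1}^T\delta_t^2\geq T\gamma^2$ immediately produces $K(K-1)e^{-T\gamma^2/2}$.

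The argument is essentially a routine unrolling once Lemma~\ref{le:drop} is in hand; the only points that demand care are the exact evaluation of the initial loss as $K(K-1)$ (rather than $K$ or $m$) and the verification that the exponential surrogate upper-bounds the confusion-matrix trace, so that the geometrically decreasing loss really bounds the error. I do not anticipate a genuine obstacle, since the contraction factor, the initial loss, and the confusion-matrix inequalities are all already established above.
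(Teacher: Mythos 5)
Your proof is correct and follows essentially the same route the paper intends: the theorem is stated there as an immediate consequence of Lemma~\ref{le:drop} once $\alpha_t=\frac12\ln\frac{1+\delta_t}{1-\delta_t}$ is substituted to collapse the contraction factor to $\sqrt{1-\delta_t^2}$, followed by $1-x\leq e^{-x}$. You additionally spell out two details the paper leaves implicit --- the evaluation $L_0=K(K-1)$ and the fact that the exponential surrogate dominates the indicator so that $L_T$ really bounds the (confusion-matrix) error --- both of which are handled correctly.
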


\section{Experimental results}
\label{sec:expe}

\subsection{Datasets and experimental setup}

In order to compare CoMBo with other approaches, it was run on 9 imbalanced multi-class classification datasets. 
They are all from the UCI Machine Learning Repository \citep{Frank10uci}.
They exhibit various degrees of imbalance, as well as various numbers of instances and attributes. 
Table \ref{table:distribution} summarizes information about these datasets.

\begin{table}[h]
\centering
\begin{tabular}{|c|c|c|c|c|}
\hline
Dataset & \# classes & \# examples & distribution of classes & Imb. ratio \\
 \hline
New-Thyroid & 3 & 215 & 150/35/30 & 5.00 \\
\hline
Balance & 3 & 625 & 49/288/288 & 5.88 \\
\hline
Car & 4 & 1728 & 1210/384/69/65 & 18.62\\
\hline
Connect & 3 & 67557 & 44473/6449/16635 & 6.90 \\
\hline
Nursery-s & 4 & 12958 & 4320/328/4266/4044 & 13.17 \\
\hline
Glass & 6 & 214 & 70/76/17/13/9/29 & 8.44 \\
\hline
E.coli & 5 & 327 & 143/77/52/35/20 & 7.15 \\
\hline
Yeast & 10 & 1484 & 463/429/244/163/51/44/35/30/20/5 & 92.60\\
\hline
Satimage & 6 & 6435 & 1533/703/1358/626/707/1508 & 2.45 \\
\hline
\end{tabular}
\caption{Class distributions of considered UCI datasets (the last column reports the ratio between the \# of instances of the majority class and the \# of instances of the 
minority class (imbalance ratio).)}  
\label{table:distribution}
\end{table}

For each dataset, we performed 10 runs of 5-fold cross-validation of CoMBo and Adaboost.MM with a low-size decision tree as a base learner ($\mathcal{W}$) and $T=200$, and we averaged the results. 
Two kinds of results are reported: advanced accuracy performances are first compiled (MAUC, Gmean), then the behavior of CoMBo is compared to AdaBoost.MM with a deeper insight on
the norm of the confusion matrix.

\subsection{Performance results}
Usual performance measures of multi-class classification include MAUC (an extension of the AUC to multi-class problems, \citep{Hand2001simple}), and 
G-mean (the geometric mean of recall values over all classes \citep{Sun2006boosting}).

Table \ref{table:resultsmc} reports these measures for CoMBo, as well as results from two boosting-based algorithms relying on resampling: AdaBoost.NC with oversampling \citep{Wang2012multiclass}, 
and SmoteBoost \citep{Chawla2003smoteboost}.
These algorithms were chosen for comparison because they showed the best results on the considered datasets \citep{Wang2012multiclass}.

\begin{table}[t]
\begin{center}
\begin{tabular}{|l|c|c|c|}
 \hline
{\bf G-mean}  & CoMBo & AdaBoost.NC & SmoteBoost  \\
\hline
Car & $\mathbf{0.967} \pm 0.023$ & $0.924 \pm 0.024$ & $0.944 \pm 0.031$  \\
Balance &  $\mathbf{0.675} \pm 0.088$ & $0.321 \pm 0.173$ & $0.000 \pm 0.000$  \\
New-Thyroid & $0.914 \pm 0.081$ & $0.927 \pm 0.056$ & $0.940 \pm 0.057$   \\
Nursery-s & $\mathbf{1.000} \pm 0.001$ & $0.967 \pm 0.006$ & $0.996 \pm 0.003$  \\
Ecoli & $0.784 \pm 0.066$ & $0.790 \pm 0.062$ & $0.803 \pm 0.059$  \\
Glass & $0.431 \pm 0.0.375$ & $0.578 \pm 0.249$ & $0.561 \pm 0.343$  \\
Satimage & $\dagger 0.825 \pm 0.012$ & $0.881 \pm 0.009$ & $\mathbf{0.898} \pm 0.010 $ \\
Yeast & $0.107 \pm 0.216$ & $0.237 \pm 0.270$ & $0.140 \pm 0.240$  \\
\hline
\hline
{\bf MAUC} & CoMBo & AdaBoost.NC & SmoteBoost  \\
\hline
Car & $0.993 \pm 0.003$ & $0.982 \pm 0.005$ & $\mathbf{0.997 \pm 0.000}$ \\
Balance & $\mathbf{0.884} \pm 0.032$ & $0.704 \pm 0.037$ & $0.703 \pm 0.027$ \\
New-Thyroid & $\mathbf{0.996} \pm 0.010$ & $0.983 \pm 0.013$ & $0.988 \pm 0.003$ \\
Nursery-s & $\mathbf{1.000} \pm 0.000$ & $0.998 \pm 0.000$ & $0.999 \pm 0.000$ \\
Ecoli & $0.961 \pm 0.015$ & $0.957 \pm 0.002$ & $0.963 \pm 0.004$ \\
Glass & $\mathbf{0.947} \pm 0.027$ & $0.881 \pm 0.009$ & $0.925 \pm 0.009$ \\
Satimage & $\dagger 0.976 \pm 0.003$ & $0.990 \pm 0.000$ & $\mathbf{0.992} \pm 0.000$ \\
Yeast & $0.861 \pm 0.025$ & $0.857 \pm 0.004$ & $0.847 \pm 0.003$ \\
\hline
\end{tabular}
\caption{Comparison with algorithms addressing classification within imbalanced datasets. 
Means and standard deviations are given over 10 runs of 5-fold cross-validations. 
Except for CoMBo, the results are the {\em best} retrieved from \cite{Wang2012multiclass}, which analysed four algorithms. 
Results in boldface indicate a significantly best measure of the algorithm over all the others, according to the sudent T-test with a confidence of 95\%; 
the $\dagger$ attached to the result on {\tt Satimage} indicates that CoMBo is significantly worse than all other reported methods.}  
\label{table:resultsmc} 
\end{center}
\end{table}

Looking at results in Table \ref{table:resultsmc}, CoMBo is quite promising w.r.t. current literature, without any tuning nor trying to optimize the reported measures.
Only the best results of AdaBoost.NC and SmoteBoost from \cite{Wang2012multiclass} are reported.
Except for the dataset {\tt Satimage} (the less imbalanced dataset), CoMBo challenges the other boosting-based approaches. 
The advantage of preferring CoMBo over the reported methods seems to be related to the imbalance ratio: the higher this ratio is, the more CoMBo makes the difference,
which is consistent with the aim of CoMBo.
It is worth noticing that CoMBo does not need any prior initialisation of the 
misclassification cost matrix: it computes and adjusts the cost matrix along the learning stages, based on the confusion matrix norm computed at each step.
In other words, CoMBo gets these good results without any other parameters than the weak algorithm (actually a 2 to 3-depth decision tree), and the number $T$ of rounds.

\subsection{Improvements from Adaboost.MM}
One may think that the good results of CoMBo are partly due to the fact that it is based on the theory underlying Adaboost.MM.
Therefore, it is worth exploring the way CoMBo differs from Adaboost.MM in the processing of imbalanced datasets through the norm of the confusion matrix.
Results are presented Table \ref{table:results}: the confusion matrix norms are reported, together with 
the accuracies, the MAUC and the G-mean.

\begin{table}[t]
\centering
\begin{tabular}{|l||c|c||c|c|}

\hline
dataset & CoMBo & AdaMM & CoMBo & AdaMM  \\
&    $\|\mathbf{C}\|$ & $\|\mathbf{C}\|$ & accuracy & accuracy  \\
\hline
Balance & $\mathbf{0.460} \pm 0.097$ & $0.559 \pm 0.112$ & $0.856 \pm 0.039$ & $0.875 \pm 0.032$ \\
\hline
Car         & $0.082 \pm 0.064$ & $0.116 \pm 0.080$ & $0.974 \pm 0.009$ & $\mathbf{0.977} \pm 0.012$  \\
\hline 
Connect & $\mathbf{0.308} \pm 0.015$ & $0.670 \pm 0.040$ & $0.728 \pm 0.015$ & $\mathbf{0.805} \pm 0.009$ \\
\hline
New-Thyroid & $0.194 \pm 0.158$ & $0.186 \pm 0.157$& $0.949 \pm 0.033$ & $0.952 \pm 0.033$  \\
\hline
Nursery-s   & $0.002 \pm 0.004$ & $0.003 \pm 0.006$ & $1.000 \pm 0.000$ & $1.000 \pm 0.001$  \\
\hline 
Yeast       & $\mathbf{0.815} \pm 0.150$ & $1.101 \pm 0.194$ & $0.572 \pm 0.025$ & $0.577 \pm 0.025$  \\
\hline 
\hline
& G-mean & G-mean & MAUC & MAUC \\
\hline
Balance & $\mathbf{0.675} \pm 0.088$ & $0.566 \pm 0.190$ & $\mathbf{0.884} \pm 0.032$ & $0.855 \pm 0.039$ \\
\hline 
Car         &  $0.967 \pm 0.023$ & $0.954 \pm 0.034$ &  $\mathbf{0.993} \pm 0.003$ & $0.988 \pm 0.004$ \\
\hline 
Connect & $\mathbf{0.703} \pm 0.013$ & $0.497 \pm 0.035$ &  $\mathbf{0.863} \pm 0.008$ & $0.852 \pm 0.009$ \\
\hline
New-Thyroid & $0.914 \pm 0.081$ & $0.915 \pm 0.075$ & $0.996 \pm 0.010$ & $0.996 \pm 0.005$ \\
\hline
Nursery-s   & $\mathbf{1.000} \pm 0.001$ & $0.999 \pm 0.002$ & $1.000 \pm 0.000$ & $1.000 \pm 0.000$ \\
\hline
Yeast       & $0.107 \pm 0.216$ & $0.000 \pm 0.000$ & $\mathbf{0.861} \pm 0.025$ & $0.847 \pm 0.003$ \\
\hline
\end{tabular}
\caption{Adaboost.MM vs. CoMBo. In non-reported datasets, results of both algorithms are equivalent). 
Means and standard deviations are given over 10 runs of 5-fold cross-validations. 
Results in boldface indicate a significantly best measure according to the sudent T-test with a confidence of 95\%.}
\label{table:results}
\end{table}

These results confirm the good results of CoMBo on imbalanced datasets as evidenced by the low values of the norms. 
They illustrate the impact of minimizing the norm of the confusion matrix.
Let us note that the accuracy with CoMBO tends to be a bit worse than the one of Adaboost.MM. 
Meanwhile, as expected, the norm of the confusion matrix is always smaller with CoMBo. 
The performances of the classifier trained with CoMBo is smoothed throughout all the classes, whatever the number of examples they feature in the dataset. 
That way, majority classes are not as favored as they usually are in multi-class approaches. 

On non-reported datasets, computed measures for both algorithms are very close. On reported results, we observe that
the higher the decrease of the confusion matrix norm, the higher the gain on MAUC and G-mean.
However, we still have to investigate the relation between MAUC/G-mean and the norm of the confusion matrix. 
These preliminary results let us think that there might be no gain using CoMBo instead 
of Adaboost.MM in other cases, but there is no loss either (the computational times are the same).

\begin{table}[htb]
\begin{center}
\begin{tabular}{cc}
$\begin{pmatrix}
0.000 & 0.147 & 0.068 \\
0.146 & 0.000 & 0.169 \\
0.056 & 0.203 & 0.000
 \end{pmatrix}$
 &
$\begin{pmatrix}
0.000 & 0.011 & 0.055 \\
0.656 & 0.000 & 0.179 \\
0.246 & 0.048 & 0.000 
\end{pmatrix}$ \\
 CoMBo  & Adaboost.MM 
\end{tabular}
\caption{Confusion matrices obtained with Adaboost.MM and CoMBo, on the dataset {\tt Connect}.}
\label{fig:inside}
\end{center}
\end{table}

Finally, Table \ref{fig:inside} illustrates what actually occurs on the dataset {\tt Connect}\footnote{The dataset {\tt Connect} was chosen for illustration because
of readability (it only features 3 classes) and because it is much imbalanced.}. 
The errors on minority classes 2 (83.5\%) and 3 (29.4\%) are dramatically high with Adaboost.MM which 
promotes the majority class (only 6.6\% of errors). These differences are reduced with CoMBo: the error on the majority
class reaches 20.2\% while errors on minority classes decrease respectively to 31.5\% and 25.9\%.
However, the real error is still higher with CoMBo: 
misclassified examples of the majority classes getting more numerous, it directly impacts the overall error rate.  
Such a behavior of CoMBo points out that it equally considers each class during the learning process, independently from any
tricky misclassification cost.

These experiments acknowledge the smoothed learning processed by CoMBo over imbalanced classes.
We think that this smoothing effect is related to the link between the minimization of the confusion matrix's norm and the MAUC/G-mean, since it allows CoMBo to perform well on all the classes, contrary to AdaBoost.MM, who performs excellently on majority classes and poorly on minority ones.

\section{Discussion and future works}
\label{sec:discussion}

The framework proposed in this paper raises several questions and prospect works, some of which are discussed in this section.

Some potential extensions consist in using the result obtained in equation (\ref{eq:optimizationMC}) in order to derive other cost sensitive algorithms.
As briefly discussed in section \ref{sec:combo}, the optimization term depends on the different loss functions defined over the classes.
The only restriction on these losses is that their value should be greater than the one returned by the indicator function.
This implies that it is possible to embed various informations into the loss functions, as long as they respect this condition.
For instance, it is easy to include in the proposed method penalization terms defined over two classes simply by replacing the loss functions considered in section \ref{sec:combo} with $\ell_{y_i,j}(h,\xbf_i)=c_{y_i,j}\exp(f_H(i,j)-f_H(i,y_i))$, where $c_{y_i,j}$ is a penalization term given as a prior or coming from a cost matrix.

Motivated by the theoretical and experimental results --- think of the difference between the estimated error of the proposed method and AdaBoost.MM's --- other works should focus on finding tighter bounds for the operator norm of the confusion matrix.
Depending on the singular values computed from the confusion matrix, the bound given in equation (\ref{eq:optimizationMC}) can be quite loose.
Finding a tighter bound would then be a first step towards a more effective method in minimizing both  the risk and the norm of the confusion matrix alike.

Continuing on the same line of thought, an important question that arises from the proposed framework is: {\em is it possible to obtain similar results for other norms}?
Even though minimizing the different norms of the confusion matrix may be the same thing due to the equivalence between the norms, the main difference consists in whether it is possible to find an analytical expression of the norm.
Some of the considered norms can be the $l1-$norm, entry-wise norms or even more exotic ones.

A second question that naturally comes to mind is: {\em is it possible to consider other definitions for the confusion matrix, such as loss-based matrices}, 
where the entries are replaced by loss functions.
More generally, it would be interesting to see if there exists a relation between the choice of the confusion matrix and choice of the norm to be minimized.

Finally, based on the empirical results, we think that the norm of the confusion matrix is quite an useful tool for measuring the performance of a model in the multi-class and/or imbalanced classes setting, alongside classical measures as MAUC, G-mean, F-measure, etc.

\section{Conclusion}
\label{sec:conclusion}

The contributions of this paper are two-fold: first we make use of the norm of the confusion matrix as an error measure in order to derive a common bound for most cost sensitive methods; secondly, using the bound as a starting point, we show step by step how to obtain an extension of AdaBoost.MM for the imbalanced classes framework.
Empirical results show that the proposed method compares favorably to other cost-based boosting methods.
These performances are both due to the inherited boosting framework --- which is more adapted to the multi-class problems than other frameworks --- 
and to the weighting scheme obtained from the choice of the loss functions in section \ref{sec:combo}.
Although much work remains to be done, as discussed in section \ref{sec:discussion}, the proposed method is, to the best of our knowledge, the first boosting based approach that aims to actively minimize the norm of the confusion matrix, by including it in the learning process.

\section*{Acknowledgements}

The authors would like to thank Liva Ralaivola for his inputs and suggestions on the work presented in this paper.

\bibliography{biblio}
\bibliographystyle{icml2013}

\end{document}